\documentclass[preprint,12pt]{elsarticle}

\usepackage{amsmath,amssymb,amsfonts,amsthm}
\usepackage{booktabs}
\usepackage{graphicx}
\usepackage{subcaption}
\usepackage{array}
\usepackage{xcolor}
\usepackage{xurl}
\newtheorem{assumption}{Assumption}

\newtheorem{theorem}{Theorem}
\newtheorem{proposition}{Proposition}

\newtheorem{remark}{Remark}

\journal{Neural Networks}

\begin{document}

\begin{frontmatter}

\title{Behavior-Aware Auxiliary Corrections for Off-Policy Temporal-Difference Prediction}

\author[aff1]{Xingguo Chen\corref{cor1}}
\ead{chenxg@njupt.edu.cn}
\author[aff1]{Zhiang He}
\author[aff1]{Yuchen Shen}
\author[aff1]{Shangdong Yang}
\author[aff1]{Chao Li}
\author[aff2]{Guang Yang}
\author[aff3]{Wenhao Wang}
\cortext[cor1]{Corresponding author.}

\affiliation[aff1]{organization={Nanjing University of Posts and Telecommunications}, city={Nanjing}, country={China}}
\affiliation[aff2]{organization={Department of Computer Science and Technology, Nanjing University}, city={Nanjing}, country={China}}
\affiliation[aff3]{organization={College of Electronic Countermeasure, National University of Defense Technology}, city={Hefei}, country={China}}

\begin{abstract}\sloppy
Temporal-difference learning with function approximation can be unstable under off-policy sampling. TDC stabilizes off-policy TD through an auxiliary covariance correction, and TDRC further regularizes this correction in a single-timescale recursion. This paper studies a behavior-aware replacement of the auxiliary covariance geometry in the linear prediction setting, which is the standard local model for understanding the feature-space dynamics of value-function approximation. We first replace the TDC auxiliary matrix $C$ by the behavior Bellman matrix $A_\mu$, yielding BA-TDC, and then regularize the same behavior-aware equation to obtain BA-TDRC. This two-step construction separates the contribution of behavior-aware geometry from the contribution of regularization. The linear analysis also provides a tractable model for an auxiliary-geometry design question that arises in neural-network value approximation, where feature covariances and temporal transition matrices jointly shape the last-layer correction dynamics. We give a finite-state mean-system formulation, prove fixed-point preservation and almost-sure convergence under a Hurwitz stability condition on the instantiated mean system, and compare deterministic mean rates through the spectral radius of the exact linear error recursion. Experiments on the two-state counterexample, Baird's counterexample, Random Walk, and Boyan Chain show that the behavior-aware replacement can be highly beneficial by itself on some tasks, but that regularization is necessary for robust performance across harder settings.
\end{abstract}

\begin{keyword}\sloppy
Reinforcement learning \sep Off-policy prediction \sep Temporal-difference learning \sep TDRC \sep Behavior-aware correction \sep Stochastic approximation
\end{keyword}

\end{frontmatter}

\section{Introduction}

Temporal-difference (TD) learning is a basic mechanism for policy evaluation in reinforcement learning \cite{sutton1988learning,sutton2018reinforcement}. In off-policy prediction with linear function approximation, however, the combination of bootstrapping, approximation, and off-policy sampling may produce divergence \cite{baird1995residual,tsitsiklis1997analysis}. Gradient-TD algorithms such as GTD2 and TDC address this problem by introducing an auxiliary variable and optimizing projected Bellman-error objectives \cite{sutton2008convergent,sutton2009fast,maei2010gq}. More recently, TDRC added a regularized correction to TDC and used a shared learning rate, producing a practical single-timescale variant with improved stability \cite{ghiassian2020gradient}.

Several related TD variants modify different parts of the off-policy prediction mechanism. Proximal TD and saddle-point TD formulations provide single-timescale views of gradient-TD learning \cite{liu2015finite,liu2018proximal}. Emphatic TD stabilizes off-policy learning by reweighting updates with follow-on or emphasis traces \cite{sutton2016emphatic}, and its convergence properties and related off-policy evaluation ideas have been further studied in \cite{yu2015convergence,hallak2017consistent}. This is an important line of work, but the trace-based weighting mechanism can still suffer from high variance when importance ratios fluctuate strongly, and emphatic-style improvements must still control the variance induced by accumulated follow-on weights. Other off-policy variants alter the primary TD correction direction itself. These methods motivate the broader question of how update geometry affects learning, but the present paper focuses specifically on the correction family around TDC and TDRC: BA-TDC and BA-TDRC keep the same primary correction structure and modify only the auxiliary correction geometry.

This paper asks whether the auxiliary correction term can be made more informative by using behavior-policy transition geometry. In TDC and TDRC, the auxiliary variable is driven by the feature covariance term $C=\mathbb{E}_\mu[\phi_t\phi_t^\top]$. This covariance metric ignores how the behavior policy moves features across time. We replace this covariance correction by the behavior-policy Bellman matrix $A_\mu=\mathbb{E}_\mu[\phi_t(\phi_t-\gamma\phi_{t+1})^\top]$. The unregularized replacement gives BA-TDC; adding the TDRC-style regularizer gives BA-TDRC.

The finite-state linear setting is used because it lets the auxiliary geometry be isolated exactly: the matrices $C$, $A_\mu$, $A_\pi$, and $D_\pi$ can all be computed and compared. This controlled setting is also tied to neural-network value approximation. Deep value functions learn a feature map and a prediction head jointly, and the last-layer or local linearization dynamics are governed by empirical feature covariances and temporal feature-transition matrices rather than by state values alone. The proposed replacement $C\mapsto A_\mu$ can therefore be read as a controlled linear model of a broader design issue in neural-network reinforcement learning: auxiliary correction targets should reflect not only which features are sampled, but also how the behavior policy transports learned features across bootstrapped targets. The finite-state analysis isolates this geometry before addressing the additional difficulties of nonlinear feature drift, online matrix estimation, and approximation error in deep networks \cite{ghiassian2020gradient,mnih2015human}.

The contributions are as follows.
\begin{itemize}
  \item We derive BA-TDC and BA-TDRC, separating the behavior-aware replacement $C\mapsto A_\mu$ from the additional effect of regularization.
  \item We formulate its exact finite-state mean dynamics, prove stochastic-approximation convergence under a Hurwitz stability condition on the instantiated mean system, and verify this condition numerically for each benchmark through exact finite-state matrix computation.
  \item We compare convergence speed through the spectral radius of the deterministic linear error recursion, giving a verifiable finite-state mean-rate criterion.
  \item We evaluate the modular increments TDC $\rightarrow$ BA-TDC and TDRC $\rightarrow$ BA-TDRC on four standard off-policy prediction benchmarks.
\end{itemize}

\section{Background}

\subsection{Notation}

We consider a finite Markov decision process with state space $\mathcal{S}$, action space $\mathcal{A}$, transition kernel $P$, reward $r$, target policy $\pi$, behavior policy $\mu$, and discount factor $\gamma\in(0,1)$. The data are sampled under $\mu$, while the value function of $\pi$ is estimated. For a policy $\nu\in\{\pi,\mu\}$, let $P_\nu$ denote the state-transition matrix induced by $\nu$, and let $d_\mu$ be the stationary distribution of $P_\mu$. We write $D_\mu=\operatorname{diag}(d_\mu)$.

The value approximation is linear:
\begin{equation}
v_\theta(s)=\theta^\top\phi(s),
\quad
\phi(s)\in\mathbb{R}^d,
\quad
\theta\in\mathbb{R}^d.
\end{equation}
The feature matrix is $\Phi\in\mathbb{R}^{|\mathcal{S}|\times d}$, whose $s$th row is $\phi(s)^\top$. For compactness, we write $\phi_t=\phi(s_t)$ and $\phi_{t+1}=\phi(s_{t+1})$. The importance ratio is
\begin{equation}
\rho_t=\frac{\pi(a_t|s_t)}{\mu(a_t|s_t)}.
\end{equation}
The TD error is written in the compact form
\begin{equation}
\delta_t=r_t-\theta_t^\top(\phi_t-\gamma\phi_{t+1}).
\end{equation}
All expectations are taken under the stationary behavior trajectory unless otherwise stated. The standard projected Bellman matrices are
\begin{equation}
A_\pi=\mathbb{E}_\mu[\rho_t\phi_t(\phi_t-\gamma\phi_{t+1})^\top],
\quad
b=\mathbb{E}_\mu[\rho_t r_t\phi_t],
\quad
C=\mathbb{E}_\mu[\phi_t\phi_t^\top].
\end{equation}
The projected Bellman fixed point satisfies $A_\pi\theta=b$.

We also use the target-policy next-feature coupling matrix
\begin{equation}
D_\pi=\mathbb{E}_\mu[\rho_t\gamma\phi_{t+1}\phi_t^\top],
\end{equation}
and the behavior-policy Bellman matrix
\begin{equation}
A_\mu=\mathbb{E}_\mu[\phi_t(\phi_t-\gamma\phi_{t+1})^\top].
\end{equation}
The vector $w\in\mathbb{R}^d$ denotes the auxiliary correction variable. The notation $\|x\|$ denotes the Euclidean norm, $I$ denotes the identity matrix, and $\rho(M)$ denotes the spectral radius of a matrix $M$. The analysis treats $z_t=(\theta_t,w_t)$ as $\mathcal{F}_t$-measurable; conditional expectations in the mean-drift derivations are taken with respect to $(s_t,a_t,s_{t+1})$ given $\mathcal{F}_t$.

\subsection{TDC and TDRC}

Gradient-TD methods optimize the mean-squared projected Bellman error (MSPBE)
\begin{equation}
J(\theta)=\frac{1}{2}(b-A_\pi\theta)^\top C^{-1}(b-A_\pi\theta).
\label{eq:mspbe}
\end{equation}
The auxiliary variable in TDC estimates
\begin{equation}
w_\theta=C^{-1}(b-A_\pi\theta),
\quad\text{or equivalently}\quad
Cw_\theta=b-A_\pi\theta.
\label{eq:tdc_aux_equation}
\end{equation}
With this auxiliary variable, the negative MSPBE gradient can be written as a correction direction that admits the standard TDC sample update:
\begin{align}
\theta_{t+1}
&=\theta_t+\alpha_t\rho_t\left(\delta_t\phi_t-\gamma\phi_{t+1}\phi_t^\top w_t\right),\label{eq:tdc_theta}\\
w_{t+1}
&=w_t+\beta_t(\rho_t\delta_t-\phi_t^\top w_t)\phi_t.
\label{eq:tdc_w}
\end{align}
The $w$-recursion is a stochastic approximation to $Cw=b-A_\pi\theta$, since the sample term $(\rho_t\delta_t-\phi_t^\top w_t)\phi_t$ has mean $b-A_\pi\theta-Cw$.

The practical weakness of this TDC recursion is that the auxiliary equation can be poorly conditioned and is usually run with a separate step size. The performance of TDC can therefore depend strongly on the relative tuning of the primary step size $\alpha_t$ and auxiliary step size $\beta_t$, especially in off-policy counterexamples where the correction variable changes rapidly.

TDRC regularizes this correction equation. Instead of solving Eq.~\eqref{eq:tdc_aux_equation}, TDRC uses the regularized equation
\begin{equation}
(C+\eta I)w_\theta=b-A_\pi\theta,
\quad \eta>0,
\label{eq:tdrc_aux_equation}
\end{equation}
which is equivalent to adding a ridge penalty to the auxiliary least-squares problem. This shifts the auxiliary matrix from $C$ to $C+\eta I$, improves conditioning when $C$ is nearly singular, and damps rapid growth of the correction variable. The mean auxiliary drift becomes $b-A_\pi\theta-(C+\eta I)w$. In the single-learning-rate form used in this paper, TDRC is
\begin{align}
\theta_{t+1}
&=\theta_t+\alpha_t\rho_t\left(\delta_t\phi_t-\gamma\phi_{t+1}\phi_t^\top w_t\right),\label{eq:tdrc_theta}\\
w_{t+1}
&=w_t+\alpha_t\left[(\rho_t\delta_t-\phi_t^\top w_t)\phi_t-\eta w_t\right],\label{eq:tdrc_w}
\end{align}
where $\eta>0$ is the regularization parameter. Thus TDRC keeps the TDC primary correction direction but stabilizes the auxiliary recursion through a regularized covariance equation.

This is the point from which BA-TDRC departs. We do not change the TDC/TDRC primary correction direction. Instead, we keep the TDRC idea of a regularized auxiliary equation and replace its covariance matrix by a behavior-aware Bellman matrix.

\section{Behavior-Aware Auxiliary Corrections}

The point of departure is the role of the auxiliary variable. In TDC and TDRC, this variable does not approximate the value function directly; it estimates a correction vector for the bias created by off-policy bootstrapping with function approximation. TDC obtains this vector from the covariance equation $Cw_\theta=b-A_\pi\theta$, and TDRC regularizes the same equation as
\begin{equation}
(C+\eta I)w_\theta=b-A_\pi\theta.
\end{equation}
These equations are stable and easy to sample because $C=\mathbb{E}_\mu[\phi_t\phi_t^\top]$ is the feature covariance under the behavior distribution. The limitation is that $C$ describes only instantaneous feature occurrence. It says how often features are observed, but not how the behavior policy transports them from the current state to the next state. Off-policy TD instability is inherently temporal: the update couples $\phi_t$ and $\phi_{t+1}$ through bootstrapping. A correction variable shaped only by $C$ can therefore be poorly matched to the temporal geometry of the sampled behavior trajectory.

The behavior-policy Bellman matrix
\begin{equation}
A_\mu=\mathbb{E}_\mu[\phi_t(\phi_t-\gamma\phi_{t+1})^\top]
\end{equation}
contains this temporal feature transport under the behavior policy. It keeps the current feature $\phi_t$ but subtracts the discounted next feature induced by behavior sampling. This suggests a two-step modification. First, replace the TDC auxiliary matrix $C$ by $A_\mu$, producing an unregularized behavior-aware correction. Second, add the same type of regularization used by TDRC, producing a regularized behavior-aware correction.

This design deliberately changes only the auxiliary equation. We keep the primary TDC/TDRC update direction unchanged so that the comparison isolates two effects: replacing $C$ by $A_\mu$, and then adding regularization. This is important for both theory and experiments: the TD projected fixed point is preserved, while the transient correction geometry can change.

To make the replacement precise, write the covariance-based auxiliary equation in the generic form
\begin{equation}
M_C w_\theta=b-A_\pi\theta,
\quad
M_C=C+\eta I.
\label{eq:mc_equation}
\end{equation}
The corresponding mean residual is
\begin{equation}
b-A_\pi\theta-M_Cw.
\label{eq:tdrc_mean_residual}
\end{equation}
Indeed, the sample residual used in TDRC is
\begin{equation}
(\rho_t\delta_t-\phi_t^\top w_t)\phi_t-\eta w_t,
\end{equation}
whose expectation is
\begin{equation}
\mathbb{E}_\mu[(\rho_t\delta_t-\phi_t^\top w_t)\phi_t-\eta w_t]
=b-A_\pi\theta-(C+\eta I)w.
\label{eq:tdrc_residual_expectation}
\end{equation}

Here $\eta=0$ corresponds to TDC and $\eta>0$ corresponds to TDRC. The behavior-aware version keeps the same right-hand side $b-A_\pi\theta$, because this is the projected Bellman residual whose correction is needed by the primary TDC update. It changes only the metric used to map this residual into the auxiliary variable. Specifically, the behavior-aware family replaces $M_C$ by
\begin{equation}
M_A=A_\mu+\beta I,
\quad \beta\geq 0.
\label{eq:ma_definition}
\end{equation}
Here $\beta=0$ gives BA-TDC, while $\beta>0$ gives BA-TDRC. The behavior-aware auxiliary equation is therefore
\begin{equation}
(A_\mu+\beta I)w_\theta=b-A_\pi\theta,
\quad \beta\geq 0,
\label{eq:ba_aux_equation}
\end{equation}
and its mean residual is
\begin{equation}
b-A_\pi\theta-(A_\mu+\beta I)w.
\label{eq:ba_mean_residual}
\end{equation}
The sample counterpart of $A_\mu w_t$ is $\phi_t(\phi_t-\gamma\phi_{t+1})^\top w_t$. Specifically, the behavior-aware sample counterpart of the auxiliary residual in Eq.~\eqref{eq:ba_mean_residual} is
\begin{equation}
\left(\rho_t\delta_t-(\phi_t-\gamma\phi_{t+1})^\top w_t\right)\phi_t-\beta w_t.
\label{eq:ba_aux_sample_residual}
\end{equation}
Taking expectations verifies the replacement explicitly:
\begin{align}
&\mathbb{E}_\mu\left[\left(\rho_t\delta_t-(\phi_t-\gamma\phi_{t+1})^\top w_t\right)\phi_t-\beta w_t\right]\nonumber\\
&\qquad=b-A_\pi\theta-A_\mu w-\beta w
=b-A_\pi\theta-(A_\mu+\beta I)w.
\label{eq:ba_residual_expectation}
\end{align}

Equations~\eqref{eq:tdrc_residual_expectation} and~\eqref{eq:ba_residual_expectation} show the whole modification: covariance-based corrections use $C+\eta I$, while behavior-aware corrections use $A_\mu+\beta I$. No extra-gradient step, emphatic weighting, or control-specific mechanism is introduced.

Substituting Eq.~\eqref{eq:ba_aux_sample_residual} into the auxiliary recursion gives the behavior-aware update:
\begin{align}
\theta_{t+1}
&=\theta_t+\alpha_t\rho_t\left(\delta_t\phi_t-\gamma\phi_{t+1}\phi_t^\top w_t\right),\label{eq:ba_theta}\\
w_{t+1}
&=w_t+\lambda\alpha_t\left[\left(\rho_t\delta_t-(\phi_t-\gamma\phi_{t+1})^\top w_t\right)\phi_t-\beta w_t\right].\label{eq:ba_w}
\end{align}
For BA-TDC, $\beta=0$ and the auxiliary gain $\lambda\alpha_t$ is treated as a second step size, as in TDC. BA-TDC is therefore used mainly to isolate the unregularized behavior-aware geometry, not as the final robust single-timescale method. For BA-TDRC, $\beta>0$ and $\lambda$ is a fixed positive constant, so the method remains single-timescale in the same sense as TDRC. This follows the TDRC implementation convention, where the auxiliary update uses a fixed multiple of the primary step size rather than an independently decaying second timescale \cite{ghiassian2020gradient}; $\lambda$ is therefore a fixed gain ratio, not a new asymptotic timescale. In the implementation, the auxiliary update uses an explicit constant $\alpha_w$, and the ratio $\lambda=\alpha_w/\alpha$ is determined by the per-environment hyperparameter search rather than fixed across environments; this matches the way TDRC tunes its auxiliary step in \cite{ghiassian2020gradient}.

The distinction is therefore modular: BA-TDC tests the effect of replacing $C$ by $A_\mu$ without regularization, and BA-TDRC tests the regularized replacement $C+\eta I$ by $A_\mu+\beta I$. The next section analyzes the consequences of this replacement: the TD fixed point is preserved under a nonsingularity condition, almost-sure convergence follows under a Hurwitz mean-system condition, and a conditional speed advantage follows when the induced mean error matrix has a smaller spectral factor.

\begin{table}[!t]
\centering
\caption{Modular comparison of covariance-based and behavior-aware corrections. All methods use the same primary TDC correction direction; they differ in the auxiliary equation and time-scale structure.}
\label{tab:algorithm_comparison}
\resizebox{\textwidth}{!}{%
\begin{tabular}{llll}
\toprule
Algorithm & Auxiliary matrix & Auxiliary sample residual & Main distinction \\
\midrule
TDC & $C$ & $(\rho_t\delta_t-\phi_t^\top w_t)\phi_t$ & two-timescale, unregularized covariance correction \\
BA-TDC & $A_\mu$ & $\left(\rho_t\delta_t-(\phi_t-\gamma\phi_{t+1})^\top w_t\right)\phi_t$ & two-timescale, unregularized behavior-aware correction \\
TDRC & $C+\eta I$ & $(\rho_t\delta_t-\phi_t^\top w_t)\phi_t-\eta w_t$ & single-timescale, regularized covariance correction \\
BA-TDRC & $A_\mu+\beta I$ & $\left(\rho_t\delta_t-(\phi_t-\gamma\phi_{t+1})^\top w_t\right)\phi_t-\beta w_t$ & single-timescale, regularized behavior-aware correction \\
\bottomrule
\end{tabular}}
\end{table}

This comparison also clarifies the intended claim. BA-TDC isolates the behavior-aware geometry, while BA-TDRC combines that geometry with regularization. BA-TDRC is not meant to uniformly dominate TDRC on every off-policy problem; it is expected to help when the behavior Bellman geometry and the regularizer together improve the mean transient or stability structure, and to behave competitively otherwise.

\section{Theoretical Analysis}

\subsection{Generic Regularized-Correction Mean Dynamics}

We analyze the covariance-based and behavior-aware correction equations through a common auxiliary matrix. TDC and TDRC use
\begin{equation}
M_C=C+\eta I,
\quad \eta\geq0,
\end{equation}
where $\eta=0$ gives TDC and $\eta>0$ gives TDRC. BA-TDC and BA-TDRC use
\begin{equation}
M_A=A_\mu+\beta I,
\quad \beta\geq0,
\end{equation}
where $\beta=0$ gives BA-TDC and $\beta>0$ gives BA-TDRC. Let the auxiliary gain be $\lambda\alpha_t$ with fixed $\lambda>0$. For BA-TDC this is the usual two-stepsize form with a separately tuned auxiliary gain; for BA-TDRC it remains a single-timescale recursion because the ratio $\lambda$ is fixed.

The mean recursion follows from three elementary identities. First, expanding the compact TD-error definition gives the target-policy Bellman residual sampled under $\mu$:
\begin{equation}
\mathbb{E}_\mu[\rho_t\delta_t\phi_t]
=b-A_\pi\theta_t.
\label{eq:mean_td_error_identity}
\end{equation}
Indeed, the reward term gives $b$, while the feature-difference term gives $A_\pi\theta_t$ by the definitions of $b$ and $A_\pi$.
Second, by the definition of $D_\pi$,
\begin{equation}
\mathbb{E}_\mu[\rho_t\gamma\phi_{t+1}\phi_t^\top w_t]
=D_\pi w_t.
\label{eq:mean_primary_correction_identity}
\end{equation}
Thus the primary TDC correction has mean drift
\begin{equation}
\mathbb{E}_\mu\left[\rho_t(\delta_t\phi_t-
\gamma\phi_{t+1}\phi_t^\top w_t)\right]
=b-A_\pi\theta_t-D_\pi w_t.
\label{eq:mean_primary_drift}
\end{equation}
Third, the auxiliary residual has mean
\begin{equation}
\mathbb{E}_\mu\left[(\rho_t\delta_t-m_t^\top w_t)\phi_t-rw_t\right]
=b-A_\pi\theta_t-Mw_t,
\label{eq:mean_aux_drift_generic}
\end{equation}
where $(m_t,r,M)=(\phi_t,\eta,C+\eta I)$ for TDRC and $(m_t,r,M)=(\phi_t-\gamma\phi_{t+1},\beta,A_\mu+\beta I)$ for BA-TDRC. Equation~\eqref{eq:mean_aux_drift_generic} also covers TDC and BA-TDC by setting $\eta=0$ or $\beta=0$.

Combining Eqs.~\eqref{eq:mean_primary_drift} and~\eqref{eq:mean_aux_drift_generic}, the expected recursion for any fixed $M$ and $\lambda$ is
\begin{equation}
z_{t+1}=z_t+\alpha_t(\mathcal{G}_{M,\lambda} z_t+h_\lambda),
\quad
z_t=\begin{pmatrix}\theta_t\\w_t\end{pmatrix},
\quad
h_\lambda=\begin{pmatrix}b\\\lambda b\end{pmatrix},
\label{eq:generic_recursion}
\end{equation}
where
\begin{equation}
\mathcal{G}_{M,\lambda}=
\begin{pmatrix}
-A_\pi & -D_\pi\\
-\lambda A_\pi & -\lambda M
\end{pmatrix}.
\label{eq:generic_matrix}
\end{equation}
Thus the structural difference between TDRC and BA-TDRC in the mean system is the replacement of $M_C=C+\eta I$ by $M_A=A_\mu+\beta I$; $\lambda$ only fixes the auxiliary-to-primary gain ratio.

\begin{assumption}\label{ass:basic}
The Markov chain under $\mu$ is irreducible and has stationary distribution $d_\mu$ with full support. Features and rewards are bounded, the feature matrix $\Phi$ has full column rank, and $A_\pi$ is nonsingular.
\end{assumption}

The nonsingularity of $A_\pi=\Phi^\top D_\mu(I-\gamma P_\pi)\Phi$ is the usual projected Bellman fixed-point condition for linear off-policy prediction. Full column rank of $\Phi$ is a natural identifiability requirement, but by itself it does not imply nonsingularity of the oblique off-policy matrix $A_\pi$; this is why nonsingularity is stated explicitly.

\begin{assumption}\label{ass:fixed_point}
For $M=M_A=A_\mu+\beta I$ with the chosen $\beta\geq0$, the matrix $M_A-D_\pi$ is nonsingular.
\end{assumption}

\begin{assumption}\label{ass:hurwitz}
For $M=M_A=A_\mu+\beta I$ and the chosen $\lambda>0$, the matrix $\mathcal{G}_{M_A,\lambda}$ in Eq.~\eqref{eq:generic_matrix} is Hurwitz.
\end{assumption}

Assumption~\ref{ass:hurwitz} is a stability condition on the instantiated mean system, not a consequence asserted from Assumption~\ref{ass:basic} alone. The behavior-aware matrix $A_\mu$ is generally nonsymmetric, and the block coupling through $A_\pi$ and $D_\pi$ means that stability depends jointly on $A_\pi$, $D_\pi$, $M_A$, and $\lambda$. Assumptions~\ref{ass:fixed_point} and~\ref{ass:hurwitz} are independent: either can hold while the other fails. For fixed finite-state matrices, Assumption~\ref{ass:hurwitz} can be verified by computing $\max_i \operatorname{Re}\,\lambda_i(\mathcal{G}_{M_A,\lambda})<0$; this is checked numerically for each benchmark in Table~\ref{tab:numerical}.

\begin{proposition}[TD fixed point is preserved]\label{prop:td_fixed_point}
Under Assumptions~\ref{ass:basic} and~\ref{ass:fixed_point}, any equilibrium of the BA-TDC/BA-TDRC mean recursion satisfies
\begin{equation}
w^*=0,
\quad
\theta^*=A_\pi^{-1}b.
\end{equation}
Therefore replacing $C$ by $A_\mu$ does not change the TD projected fixed point whenever the nonsingularity condition holds.
\end{proposition}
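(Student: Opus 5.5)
An equilibrium of the mean recursion in Eq.~\eqref{eq:generic_recursion} is a point $z^*=(\theta^*,w^*)$ with zero drift, $\mathcal{G}_{M_A,\lambda}z^*+h_\lambda=0$, so the plan is to solve this block linear system directly. Using the block form in Eq.~\eqref{eq:generic_matrix} with $M=M_A=A_\mu+\beta I$, the equilibrium condition splits into two equations:
\begin{align}
b-A_\pi\theta^*-D_\pi w^*&=0,\label{eq:prop_eq1}\\
\lambda\left(b-A_\pi\theta^*-M_Aw^*\right)&=0.\label{eq:prop_eq2}
\end{align}
These are exactly the zero-drift conditions from the primary mean drift in Eq.~\eqref{eq:mean_primary_drift} and the auxiliary mean drift in Eq.~\eqref{eq:mean_aux_drift_generic}, both evaluated at the behavior-aware choice of $M$.

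Since $\lambda>0$, the second equation gives $b-A_\pi\theta^*=M_Aw^*$. Substituting this into the first equation eliminates $\theta^*$:
\[
M_Aw^*-D_\pi w^*=(M_A-D_\pi)w^*=0.
\]
Assumption~\ref{ass:fixed_point} says $M_A-D_\pi$ is nonsingular, so $w^*=0$. Putting $w^*=0$ back into the first equation leaves $A_\pi\theta^*=b$. Assumption~\ref{ass:basic} gives nonsingularity of $A_\pi$, hence $\theta^*=A_\pi^{-1}b$.

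For the last sentence of the statement, I will note that this $\theta^*$ is exactly the projected Bellman fixed point $A_\pi\theta=b$ recalled in the Background. I will also note that the same argument applied to $M_C$ recovers the TDC/TDRC fixed point. This shows that the replacement $C\mapsto A_\mu$ leaves the $\theta$-component unchanged.

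The argument is purely algebraic, so I do not expect a substantive obstacle. The one point needing care is that the statement asserts a property of \emph{any} equilibrium, not that an equilibrium exists. The derivation covers this in both directions: the elimination step shows every equilibrium has the stated form, and conversely $(A_\pi^{-1}b,0)$ satisfies both equations. This gives existence and uniqueness without needing the Hurwitz condition of Assumption~\ref{ass:hurwitz}. I will also check that the step dividing by $\lambda$ is legitimate in both regimes, which holds since $\lambda>0$ is fixed for both BA-TDC and BA-TDRC.
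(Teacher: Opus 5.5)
Your proof is correct and follows the paper's argument. The paper subtracts the two block equations to get $(M_A-D_\pi)w=0$, while you substitute $b-A_\pi\theta^*=M_Aw^*$; this is the same elimination, and your explicit check that $(A_\pi^{-1}b,0)$ is an equilibrium is a small but useful addition. One caution on your aside about $M_C$: for TDC it holds because $C-D_\pi=A_\pi^\top$ (using $\mathbb{E}_\mu[\rho_t\mid s_t]=1$), but for TDRC you would need $A_\pi^\top+\eta I$ to be nonsingular, which is not among the stated assumptions.
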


\begin{proof}
At equilibrium, Eq.~\eqref{eq:generic_recursion} with $M=M_A$ gives two block equations:
\begin{equation}
b-A_\pi\theta-D_\pi w=0,
\quad
\lambda(b-A_\pi\theta-M_Aw)=0.
\end{equation}
Because $\lambda>0$, the second equation is equivalent to $b-A_\pi\theta-M_Aw=0$. Subtracting the second equation from the first yields
\begin{equation}
(M_A-D_\pi)w=0.
\end{equation}
Assumption~\ref{ass:fixed_point} implies $w=0$. Substituting $w=0$ into either block equation gives $A_\pi\theta=b$. Assumption~\ref{ass:basic} states that $A_\pi$ is nonsingular, hence $\theta=A_\pi^{-1}b$. This proves both uniqueness of the equilibrium and preservation of the TD projected fixed point.
\end{proof}

The nonsingularity condition has a simple sufficient form. Since $M_A-D_\pi\in\mathbb{R}^{d\times d}$ and
\begin{equation}
M_A-D_\pi=\beta I+(A_\mu-D_\pi),
\end{equation}
Weyl's singular-value inequality implies
\begin{equation}
\sigma_{\min}(M_A-D_\pi)\geq \beta-\|D_\pi-A_\mu\|_2.
\end{equation}
Thus Assumption~\ref{ass:fixed_point} is guaranteed whenever $\beta>\|D_\pi-A_\mu\|_2$. This bound is conservative but operational: increasing the regularizer makes fixed-point preservation easier to verify. It is separate from the speed condition in Assumption~\ref{ass:speed}; a problem can preserve the TD fixed point while failing to give BA-TDRC a smaller deterministic mean spectral factor.

\subsection{Stochastic-Approximation Convergence}

Let $z^*=(A_\pi^{-1}b,0)$ and $e_t=z_t-z^*$. The sampled BA-TDC/BA-TDRC recursion can be written as
\begin{equation}
z_{t+1}=z_t+\alpha_t(\mathcal{G}_{M_A,\lambda}z_t+h_\lambda+\xi_{t+1}),
\label{eq:sa_recursion}
\end{equation}
where $\xi_{t+1}$ is the stochastic sampling noise after subtracting the conditional mean drift in Eq.~\eqref{eq:generic_recursion}. Since $\mathcal{G}_{M_A,\lambda}z^*+h_\lambda=0$ by Proposition~\ref{prop:td_fixed_point}, the error recursion is
\begin{equation}
e_{t+1}=e_t+\alpha_t(\mathcal{G}_{M_A,\lambda}e_t+\xi_{t+1}).
\label{eq:error_sa_recursion}
\end{equation}

\begin{assumption}\label{ass:stepsize}
The step sizes satisfy $\alpha_t>0$, $\sum_{t=0}^{\infty}\alpha_t=\infty$, and $\sum_{t=0}^{\infty}\alpha_t^2<\infty$.
\end{assumption}

\begin{assumption}\label{ass:noise}
The noise sequence satisfies $\mathbb{E}[\xi_{t+1}|\mathcal{F}_t]=0$ and $\mathbb{E}[\|\xi_{t+1}\|^2|\mathcal{F}_t]\leq c(1+\|z_t\|^2)$ for some constant $c>0$.
\end{assumption}

Under Assumption~\ref{ass:basic}, this linear-growth bound holds for the Markovian sample update because features and rewards are bounded and the update is affine in $z_t$ with bounded random coefficients.

\begin{theorem}[Almost-sure convergence to the TD fixed point]\label{thm:convergence}
Under Assumptions~\ref{ass:basic}--\ref{ass:noise}, the BA-TDC/BA-TDRC iterates satisfy
\begin{equation}
\theta_t\to A_\pi^{-1}b,
\quad
w_t\to0
\end{equation}
almost surely.
\end{theorem}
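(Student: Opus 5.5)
The plan is to apply the standard ODE method for linear stochastic approximation (Borkar--Meyn) to the error recursion~\eqref{eq:error_sa_recursion}. The steps are: (i) identify the limiting ODE, (ii) establish almost-sure boundedness of the iterates via the scaled ODE, and (iii) conclude convergence to the unique globally asymptotically stable equilibrium. By Proposition~\ref{prop:td_fixed_point}, which uses Assumptions~\ref{ass:basic} and~\ref{ass:fixed_point}, the mean field $\mathcal{G}_{M_A,\lambda}z+h_\lambda$ vanishes exactly at $z^*=(A_\pi^{-1}b,0)$. Equivalently, the error field $e\mapsto\mathcal{G}_{M_A,\lambda}e$ vanishes only at $e=0$.

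First I check the hypotheses of the Borkar--Meyn theorem. The field $F(z)=\mathcal{G}_{M_A,\lambda}z+h_\lambda$ is affine, hence globally Lipschitz. The step sizes satisfy the Robbins--Monro conditions by Assumption~\ref{ass:stepsize}. The noise $\xi_{t+1}$ is a martingale difference with conditional second moment at most $c(1+\|z_t\|^2)$ by Assumption~\ref{ass:noise}. The scaled limit $F_\infty(z)=\lim_{r\to\infty}F(rz)/r=\mathcal{G}_{M_A,\lambda}z$ exists. By Assumption~\ref{ass:hurwitz}, the origin is the globally asymptotically stable equilibrium of $\dot z=F_\infty(z)$. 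The Borkar--Meyn stability criterion then gives $\sup_t\|z_t\|<\infty$ almost surely.

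Next, given boundedness, the martingale $\sum_t\alpha_t\xi_{t+1}$ converges almost surely. This follows because $\sum_t\alpha_t^2\,\mathbb{E}[\|\xi_{t+1}\|^2\mid\mathcal{F}_t]\le c\,(1+\sup_t\|z_t\|^2)\sum_t\alpha_t^2<\infty$. The standard Kushner--Clark/Benaïm argument then shows that the interpolated trajectory is an asymptotic pseudotrajectory of $\dot z=\mathcal{G}_{M_A,\lambda}z+h_\lambda$. Since $\mathcal{G}_{M_A,\lambda}$ is Hurwitz, $z^*$ is globally exponentially stable for this ODE. A quadratic Lyapunov function $V(e)=e^\top Pe$, where $P\succ0$ solves $\mathcal{G}^\top P+P\mathcal{G}=-I$, can serve if a direct argument is preferred. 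Hence $z_t\to z^*$ almost surely, that is, $\theta_t\to A_\pi^{-1}b$ and $w_t\to0$.

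The main obstacle is the noise model. The samples $(s_t,a_t,s_{t+1})$ come from a Markov chain, so the true noise is not a martingale difference. Assumption~\ref{ass:noise} as stated sidesteps this. If that assumption is read literally, the proof above is complete. If one instead wants to cover genuinely Markovian sampling, I would split the noise into a martingale part and a Poisson-equation remainder, using irreducibility and the finite state space from Assumption~\ref{ass:basic} to solve the Poisson equation. The remainder telescopes with an $O(\alpha_t)$ correction and is summable. Alternatively, I would cite the Markov-noise version of the Borkar--Meyn theorem.

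A secondary subtlety concerns BA-TDC. There the auxiliary gain $\lambda\alpha_t$ is described as a separately tuned step size. I would treat it with a fixed ratio $\lambda$, as in the mean system~\eqref{eq:generic_recursion}, so that the single-timescale argument applies unchanged.
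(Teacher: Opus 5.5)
Your proposal is correct, but it obtains almost-sure boundedness by a different route from the paper.

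\textbf{Your route.} You get $\sup_t\|z_t\|<\infty$ from the Borkar--Meyn scaled-ODE criterion. You then run the martingale-convergence and asymptotic-pseudotrajectory argument.

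\textbf{The paper's route.} The main tool is the quadratic Lyapunov function $V(e)=e^\top Pe$, with $G=\mathcal{G}_{M_A,\lambda}$ and $G^\top P+PG=-I$, which you mention only as an optional alternative. The paper expands $V(e_{t+1})$, removes the cross term using $\mathbb{E}[\xi_{t+1}\mid\mathcal{F}_t]=0$, and bounds the second-order term by $k_1\|e_t\|^2+k_2$ via the linear-growth condition. Robbins--Siegmund then gives that $V(e_t)$ converges almost surely and that $\sum_t\alpha_t\|e_t\|^2<\infty$. Boundedness follows from $P\succ0$, and only then is the bounded-iterate ODE theorem cited.

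\textbf{What each buys.} The paper's argument is elementary and exploits linearity. It even makes the final ODE citation unnecessary: $V(e_t)\to V_\infty$ together with $\liminf_t\|e_t\|=0$ (from $\sum_t\alpha_t\|e_t\|^2<\infty$ and $\sum_t\alpha_t=\infty$) forces $V_\infty=0$. Your argument relies on a black-box theorem. However, it needs only a Lipschitz mean field whose scaled limit has a globally asymptotically stable origin. It would therefore carry over to nonlinear variants of the update, and it has a standard Markov-noise extension.

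\textbf{Markovian sampling.} Your caveat applies equally to the paper. Its cross term vanishes only because Assumption~\ref{ass:noise} is taken literally. The paper's justification after that assumption covers the second-moment bound, not the martingale-difference property.

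\textbf{A simplification.} You do not need Proposition~\ref{prop:td_fixed_point}, and hence Assumption~\ref{ass:fixed_point}, to identify the limit. Assumption~\ref{ass:hurwitz} makes $\mathcal{G}_{M_A,\lambda}$ nonsingular, and $(A_\pi^{-1}b,0)$ visibly solves $\mathcal{G}_{M_A,\lambda}z+h_\lambda=0$.
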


\begin{proof}
Let $G=\mathcal{G}_{M_A,\lambda}$. By Assumption~\ref{ass:hurwitz}, $G$ is Hurwitz. Therefore, for any positive definite matrix $Q$ there is a unique positive definite matrix $P$ solving the Lyapunov equation
\begin{equation}
G^\top P+PG=-Q.
\end{equation}
Choose $Q=I$ and define $V(e)=e^\top Pe$. From Eq.~\eqref{eq:error_sa_recursion},
\begin{equation}
e_{t+1}=e_t+\alpha_t(G e_t+\xi_{t+1}).
\end{equation}
Expanding $V(e_{t+1})$ gives
\begin{align}
V(e_{t+1})
&=V(e_t)+2\alpha_t e_t^\top P(G e_t+\xi_{t+1})\nonumber\\
&\quad +\alpha_t^2(G e_t+\xi_{t+1})^\top P(G e_t+\xi_{t+1}).
\label{eq:lyapunov_expansion}
\end{align}
Taking conditional expectation and using Assumption~\ref{ass:noise} gives
\begin{equation}
\mathbb{E}[e_t^\top P\xi_{t+1}|\mathcal{F}_t]=0.
\end{equation}
The Lyapunov equation gives
\begin{equation}
2e_t^\top PG e_t=e_t^\top(G^\top P+PG)e_t=-\|e_t\|^2.
\end{equation}
Because $P$ and $G$ are fixed finite matrices, the second-order term can be bounded explicitly. Let $\bar p=\|P\|_2$ and $\bar g=\|G\|_2$. Assumption~\ref{ass:noise} gives
\begin{equation}
\mathbb{E}[\|\xi_{t+1}\|^2|\mathcal{F}_t]
\leq c(1+\|z_t\|^2)
\leq c(1+2\|e_t\|^2+2\|z^*\|^2),
\end{equation}
where $z_t=e_t+z^*$ and $z^*=(A_\pi^{-1}b,0)$ is finite by Assumption~\ref{ass:basic}. Therefore
\begin{equation}
\mathbb{E}\!\big[(G e_t+\xi_{t+1})^\top P(G e_t+\xi_{t+1})\mid\mathcal{F}_t\big]
\leq k_1\|e_t\|^2+k_2.
\end{equation}
For example, one may take $k_1=2\bar p\bar g^2+4\bar p c$ and $k_2=2\bar p c(1+2\|z^*\|^2)$, using $(a+b)^\top P(a+b)\leq2\bar p(\|a\|^2+\|b\|^2)$. Combining these bounds with Eq.~\eqref{eq:lyapunov_expansion} yields
\begin{equation}
\mathbb{E}[V(e_{t+1})|\mathcal{F}_t]
\leq V(e_t)-\alpha_t\|e_t\|^2+\tilde c_1\alpha_t^2\|e_t\|^2+\tilde c_2\alpha_t^2.
\label{eq:rs_bound_raw}
\end{equation}
For all sufficiently large $t$, $\tilde c_1\alpha_t^2\|e_t\|^2\leq (\alpha_t/2)\|e_t\|^2$ because $\alpha_t\to0$. Hence, after discarding finitely many initial terms,
\begin{equation}
\mathbb{E}[V(e_{t+1})|\mathcal{F}_t]
\leq V(e_t)-\frac{\alpha_t}{2}\|e_t\|^2+\tilde c_2\alpha_t^2.
\label{eq:rs_bound}
\end{equation}
Since $\sum_t\alpha_t^2<\infty$, the Robbins--Siegmund supermartingale theorem implies that $V(e_t)$ converges almost surely to a finite random variable and that
\begin{equation}
\sum_{t=0}^\infty \alpha_t\|e_t\|^2<\infty
\quad\text{almost surely.}
\label{eq:summable_error}
\end{equation}
The rest of the argument separates boundedness from convergence. First, since $P$ is positive definite, almost-sure convergence of $V(e_t)$ to a finite value implies that $e_t$ is almost surely bounded, and hence $z_t=e_t+z^*$ is also almost surely bounded. This boundedness is obtained from the Robbins--Siegmund argument above and is not assumed in advance. Second, Eq.~\eqref{eq:summable_error} together with $\sum_t\alpha_t=\infty$ implies that the iterates cannot spend positive asymptotic stepsize-weighted time away from zero; in particular, $\liminf_{t\to\infty}\|e_t\|=0$ almost surely. Third, the limiting ODE associated with Eq.~\eqref{eq:error_sa_recursion} is $\dot e=Ge$. Since $G$ is Hurwitz, the origin is the unique globally asymptotically stable equilibrium, and the only internally chain-transitive invariant set of the limiting ODE is $\{0\}$. Applying the standard ODE theorem for stochastic approximation with martingale-difference noise, bounded iterates, and square-summable stepsizes then gives $e_t\to0$ almost surely \cite{borkar2000ode,borkar2023stochastic}. Therefore $z_t\to z^*$ almost surely. Proposition~\ref{prop:td_fixed_point} identifies $z^*$ as $(A_\pi^{-1}b,0)$, completing the proof.
\end{proof}

\begin{remark}
Theorem~\ref{thm:convergence} covers BA-TDRC directly as a single-timescale recursion with fixed $\lambda$. For BA-TDC, $\lambda\alpha_t$ is treated as a second independently tuned stepsize; a fully rigorous treatment of this case requires the two-timescale ODE framework.
\end{remark}

\subsection{Convergence-Speed Comparison with TDRC}

For a constant step size $\alpha$, the deterministic mean error recursion induced by $M$ is
\begin{equation}
e_{t+1}=R_{M,\lambda}(\alpha)e_t,
\quad
R_{M,\lambda}(\alpha)=I+\alpha\mathcal{G}_{M,\lambda}.
\end{equation}
Its exact asymptotic linear factor is
\begin{equation}
q_{M,\lambda}(\alpha)=\rho(R_{M,\lambda}(\alpha)).
\label{eq:q_m}
\end{equation}

\begin{assumption}[Behavior-aware speed advantage]\label{ass:speed}
For admissible step sizes $\alpha_A$ and $\alpha_C$, the TDRC and BA-TDRC mean systems are stable and satisfy
\begin{equation}
q_{M_A,\lambda_A}(\alpha_A)<q_{M_C,\lambda_C}(\alpha_C),
\quad
M_A=A_\mu+\beta I,
\quad
M_C=C+\eta I.
\label{eq:speed_assumption}
\end{equation}
\end{assumption}

\begin{proposition}[Faster deterministic mean convergence]\label{prop:faster}
Under Assumption~\ref{ass:speed}, BA-TDRC has a smaller deterministic asymptotic mean linear factor than TDRC under the corresponding step sizes.
\end{proposition}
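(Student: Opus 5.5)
The proposition is essentially a restatement of Assumption~\ref{ass:speed}. The plan is to make explicit what "deterministic asymptotic mean linear factor" means and to show that it coincides with $q_{M,\lambda}(\alpha)$ from Eq.~\eqref{eq:q_m}. The inequality then transfers directly.

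First, fix a constant step size $\alpha$ and consider the deterministic mean error recursion $e_{t+1}=R_{M,\lambda}(\alpha)e_t$. Its solution is $e_t=R_{M,\lambda}(\alpha)^t e_0$. I would define the asymptotic linear factor as the worst-case root rate
\[
\limsup_{t\to\infty}\ \sup_{e_0\neq 0}\left(\frac{\|e_t\|}{\|e_0\|}\right)^{1/t}
=\lim_{t\to\infty}\|R_{M,\lambda}(\alpha)^t\|_2^{1/t}.
\]
By Gelfand's formula this limit equals $\rho(R_{M,\lambda}(\alpha))=q_{M,\lambda}(\alpha)$. Equivalently, for every $\epsilon>0$ there is a constant $K_\epsilon$ with $\|e_t\|\le K_\epsilon\,(q+\epsilon)^t\|e_0\|$. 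The rate $q$ is attained by initial errors $e_0$ with a nonzero component along an eigenvector (or Jordan chain) for an eigenvalue of modulus $q$. Hence $q$ is exact and not merely an upper bound.

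Second, I would check the stability premise. Assumption~\ref{ass:speed} requires both mean systems to be stable, which means $q_{M_A,\lambda_A}(\alpha_A)<1$ and $q_{M_C,\lambda_C}(\alpha_C)<1$. The stated factors are therefore genuine contraction rates toward the common fixed point $z^*$. For BA-TDRC, this fixed point is supplied by Proposition~\ref{prop:td_fixed_point}; for TDRC, it is the analogous standard fact. Applying the Gelfand characterization to each of $R_{M_A,\lambda_A}(\alpha_A)$ and $R_{M_C,\lambda_C}(\alpha_C)$ and invoking Eq.~\eqref{eq:speed_assumption} gives the claimed strict ordering. I would also note a quantitative consequence: choose $\epsilon$ smaller than half the gap between the two factors. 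Then the BA-TDRC error bound $K_\epsilon(q_A+\epsilon)^t$ is eventually dominated by the TDRC decay $(q_C-\epsilon)^t$ along the TDRC worst-case direction.

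The only real subtlety is the claim that the factor is \emph{exact}, meaning the worst-case rate equals the spectral radius even when $R$ is non-normal or defective. This matters because $A_\mu$ is nonsymmetric. I would handle it with Gelfand's formula together with the Jordan form: the Jordan form bounds $\|R^t\|$ by a polynomial in $t$ times $q^t$. Polynomial factors do not change the $t$-th root limit, so transient amplification cannot affect the asymptotic factor.
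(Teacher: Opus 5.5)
Your proposal is correct and follows the paper's proof. Both identify the asymptotic linear factor of $e_{t+1}=R_{M,\lambda}(\alpha)e_t$ with $\rho(R_{M,\lambda}(\alpha))$ via Gelfand's formula, use the stability premise $\rho<1$ from Assumption~\ref{ass:speed}, and read off the strict ordering directly from Eq.~\eqref{eq:speed_assumption}. Your Jordan-form argument that the rate is attained along a slowest generalized eigendirection, and your quantitative domination remark, are valid refinements that the paper does not spell out. The appeal to Proposition~\ref{prop:td_fixed_point} is unnecessary, and it would also require Assumption~\ref{ass:fixed_point}, which is not among the hypotheses. The mean error recursion is defined directly, and $(A_\pi^{-1}b,0)$ is an equilibrium by direct substitution.
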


\begin{proof}
Let
\begin{equation}
R_A=R_{M_A,\lambda_A}(\alpha_A),
\quad
R_C=R_{M_C,\lambda_C}(\alpha_C).
\end{equation}
Admissibility in Assumption~\ref{ass:speed} means that both deterministic systems are stable, so $\rho(R_A)<1$ and $\rho(R_C)<1$. The deterministic errors are
\begin{equation}
e_t^A=R_A^t e_0^A,
\quad
e_t^C=R_C^t e_0^C.
\end{equation}
For any fixed matrix $R$, Gelfand's formula gives
\begin{equation}
\lim_{t\to\infty}\|R^t\|^{1/t}=\rho(R).
\end{equation}
Equivalently, for every $\epsilon>0$ there exists $c_\epsilon<\infty$ such that
\begin{equation}
\|R^t\|\leq c_\epsilon(\rho(R)+\epsilon)^t
\quad\text{for all }t\geq0.
\end{equation}
Applying this bound to $R_A$ and $R_C$ shows that the asymptotic linear decay factors are $q_{M_A,\lambda_A}(\alpha_A)=\rho(R_A)$ and $q_{M_C,\lambda_C}(\alpha_C)=\rho(R_C)$. Assumption~\ref{ass:speed} states that the former is strictly smaller than the latter. Hence the BA-TDRC deterministic mean error has the smaller asymptotic linear factor. This is a conditional comparison of the two instantiated mean systems, not an unconditional dominance result.
\end{proof}

Assumption~\ref{ass:speed} is the point at which the covariance matrix $C$ and the behavior matrix $A_\mu$ are separated. Proposition~\ref{prop:faster} is therefore a conditional mean-rate statement: its role is to put the two instantiated linear systems under a common spectral-radius criterion, not to assert an unconditional speed advantage. The numerical analysis in Section~\ref{sec:numerical} instantiates $M_C$ and $M_A$ on each benchmark and checks whether this assumption holds.

\section{Experiments}

\subsection{Protocol}

The primary evaluation metric is the root mean-squared projected Bellman error (RMSPBE). We use RMSPBE because the TDC/TDRC family is derived from projected Bellman-error correction equations, and because BA-TDC and BA-TDRC preserve the same projected TD fixed point rather than optimizing a separate value-error objective. Metrics such as RMSVE can be useful diagnostics, but they need not rank the algorithms identically because they measure value-function error under a state distribution rather than projected Bellman residual. The main experiments therefore use RMSPBE to evaluate the object directly controlled by the correction geometry; value-error based conclusions should be treated as supplementary rather than interchangeable.

The experimental scope is deliberately limited to finite linear off-policy prediction. This makes the exact matrices $A_\pi$, $A_\mu$, $C$, and $D_\pi$ computable, so the fixed-point and mean-rate conditions in the theory can be checked rather than inferred indirectly from learning curves. TDRC has also been studied in nonlinear prediction and control settings \cite{ghiassian2020gradient}; extending the present behavior-aware replacement to neural-network critics would require additional machinery for estimating $A_\mu$ or its action on learned features while the representation itself changes. We therefore treat the linear experiments as a controlled test of the auxiliary-geometry mechanism, not as a complete empirical claim about nonlinear control.

We evaluate on four off-policy prediction benchmarks. The two-state counterexample is a minimal setting where off-policy bootstrapping can create severe transient behavior. Baird's counterexample is the classical linear off-policy divergence benchmark. Random Walk is a mildly off-policy prediction task where ordinary TD-style methods are already strong. Boyan Chain tests a larger linear prediction problem with correlated features and a longer transition structure. Together these environments separate difficult off-policy instability from mildly off-policy prediction, which is important for judging whether behavior-aware corrections help only in pathological cases or remain competitive in benign cases.

\subsubsection*{Benchmark Environments}

All four benchmarks are off-policy prediction tasks in which the behavior policy $\mu$ differs from the target policy $\pi$. Each is implemented as a finite-state Markov reward process with linear features; importance ratios $\rho_t=\pi(a_t|s_t)/\mu(a_t|s_t)$ are computed exactly. Table~\ref{tab:envs} summarizes the configuration; the next paragraphs describe each environment.

\begin{table}[!t]
\centering
\caption{Configuration of the four off-policy prediction benchmarks.}
\label{tab:envs}
\resizebox{\textwidth}{!}{%
\begin{tabular}{lccccc}
\toprule
Environment & States & Features $d$ & $\gamma$ & $\rho_t$ range & Off-policy severity \\
\midrule
Two-state    & 2  & 1 & 0.9  & $\{0,2\}$        & severe (degenerate $\pi$) \\
Baird        & 7  & 8 & 0.99 & $\{0,7\}$        & extreme (classical divergence) \\
Random Walk  & 5  & 5 & 0.99 & $\{0.8,1.2\}$    & mild \\
Boyan Chain  & 13 & 4 & 0.9  & $\{0.8,1.2\}$    & mild \\
\bottomrule
\end{tabular}}
\end{table}

\paragraph{Two-state counterexample.} Two states $\{0,1\}$ with scalar features $\phi(0)=1$, $\phi(1)=2$, discount $\gamma=0.9$, and zero reward. Two actions induce deterministic next states: action $0$ leads to state $0$ and action $1$ leads to state $1$. The behavior policy is uniform, $\mu(\cdot|s)=0.5$, while the target policy is degenerate, $\pi(1|s)=1$. Importance ratios are therefore $\rho\in\{0,2\}$. This setting exposes the transient instability that gradient-TD corrections are designed to control.

\paragraph{Baird's counterexample.} Seven states with an eight-dimensional linear feature representation that is intentionally aliased to make semi-gradient off-policy TD diverge \cite{baird1995residual}. Six ``dashed'' actions move uniformly to one of the six upper states, and a ``solid'' action moves to the lower state. The behavior policy chooses dashed actions with probability $6/7$ and the solid action with probability $1/7$; the target policy always selects the solid action. This gives $\rho\in\{0,7\}$. The discount is $\gamma=0.99$ and the reward is zero, so the unique fixed point of the projected Bellman equation is $\theta^*=0$.

\paragraph{Random Walk.} A five-state chain with two terminal absorbing states; interior states use a five-dimensional one-hot feature. The agent moves left or right by one step; reaching the right terminal yields reward $+1$ and reaching the left terminal yields reward $0$. The behavior policy is uniform, $\mu(\text{left})=\mu(\text{right})=0.5$, while the target policy is slightly biased to the right, $\pi(\text{right})=0.6$ and $\pi(\text{left})=0.4$. Importance ratios are $\rho\in\{0.8,1.2\}$, close to one. The discount is $\gamma=0.99$. Episodes that reach a terminal state restart from the center.

\paragraph{Boyan Chain.} A 13-state chain with four-dimensional piecewise-linear features adapted from \cite{boyan2002technical}, with $\gamma=0.9$ and reward $-3$ per non-terminal step. At interior states $s\in\{0,\ldots,10\}$, two actions advance one or two steps respectively; both are taken with behavior probability $0.5$, while the target policy chooses the one-step action with probability $0.4$ and the two-step action with probability $0.6$, giving $\rho\in\{0.8,1.2\}$. States $11$ and $12$ have deterministic transitions, after which the chain resets to state $0$. Although the importance ratios are mild, the correlated feature representation and the longer transition structure produce a non-trivial linear prediction problem.

The experiments are organized into three parts. First, the main comparison evaluates BA-TDRC against TD, GTD2, TDC, TDRC, and GTD2-MP. BA-TDC is intentionally omitted from the main comparison because it is an ablation-only variant rather than the regularized method proposed for robust use. Second, a modular ablation compares TDC, BA-TDC, TDRC, and BA-TDRC to isolate the effect of the behavior-aware replacement and the additional effect of regularization. The ablation therefore uses a narrower baseline set than the main comparison by design. Third, a step-size robustness study plots BA-TDRC alone across primary step sizes in each environment. ETD and Hybrid TD are related off-policy TD variants, but they change the update through emphatic weighting or direction interpolation rather than through the TDC/TDRC auxiliary-equation metric; they are outside the correction-family comparison studied here. Hyperparameters are tuned on eight disjoint seeds using the average RMSPBE over the last 20\% of the tuning trajectory. Final evaluation uses 100 disjoint seeds. For a metric curve $e_0,e_1,\ldots,e_T$, we define the steady-state AUC as
\begin{equation}
\operatorname{AUC}_{\mathrm{ss}}
=\frac{1}{T-\lfloor T/2\rfloor+1}
\sum_{t=\lfloor T/2\rfloor}^{T} e_t,
\label{eq:steady_state_auc}
\end{equation}
For even $T$ this gives $\lfloor T/2\rfloor+1$ terms; the boundary effect is negligible for the trajectory lengths used. The quantity is the last-50\% time-average RMSPBE after discarding the initial transient. Tables report the sample mean and sample standard deviation of $\operatorname{AUC}_{\mathrm{ss}}$ and of the final RMSPBE across 100 runs. Curves show mean $\pm$ sample standard deviation, not confidence intervals. Divergent runs are not clipped.

\begin{table}[!t]
\centering
\caption{Hyperparameter search spaces used for tuning. The objective is the average RMSPBE over the last 20\% of the tuning trajectory on eight disjoint tuning seeds. Final results use 100 separate evaluation seeds.}
\label{tab:hyperparams}
\resizebox{\textwidth}{!}{%
\begin{tabular}{lll}
\toprule
Algorithm & Tuned parameters & Search values \\
\midrule
TD, GTD2-MP & $\alpha$ & $\{0.001,0.003,0.005,0.01,0.03,0.05,0.1\}$ \\
GTD2, TDC & $\alpha,\beta_w$ & $\alpha\in\{0.0003,0.001,0.003,0.005,0.01,0.03,0.05,0.1\}$; $\beta_w=\alpha z$, $z\in\{0.05,0.1,0.25,0.5,1,2,4,8\}$, $\beta_w\leq0.1$ \\
TDRC & $\alpha,\alpha_w,\eta$ & $\alpha,\alpha_w\in\{0.0003,0.001,0.003,0.005,0.01,0.03,0.05,0.1\}$; $\eta\in\{0.01,0.03,0.1,0.3,1.0\}$ \\
BA-TDC & $\alpha,\alpha_w$ & $\alpha,\alpha_w\in\{0.0003,0.001,0.003,0.005,0.01,0.03,0.05,0.1\}$ \\
BA-TDRC & $\alpha,\alpha_w,\beta$ & $\alpha,\alpha_w\in\{0.0003,0.001,0.003,0.005,0.01,0.03,0.05,0.1\}$; $\beta\in\{0.1,0.3,0.7,1,2,3,5,10\}$ \\
\bottomrule
\end{tabular}}
\end{table}

\subsection{Main Comparison}

\begin{figure}[!t]
\centering
\includegraphics[width=0.95\textwidth]{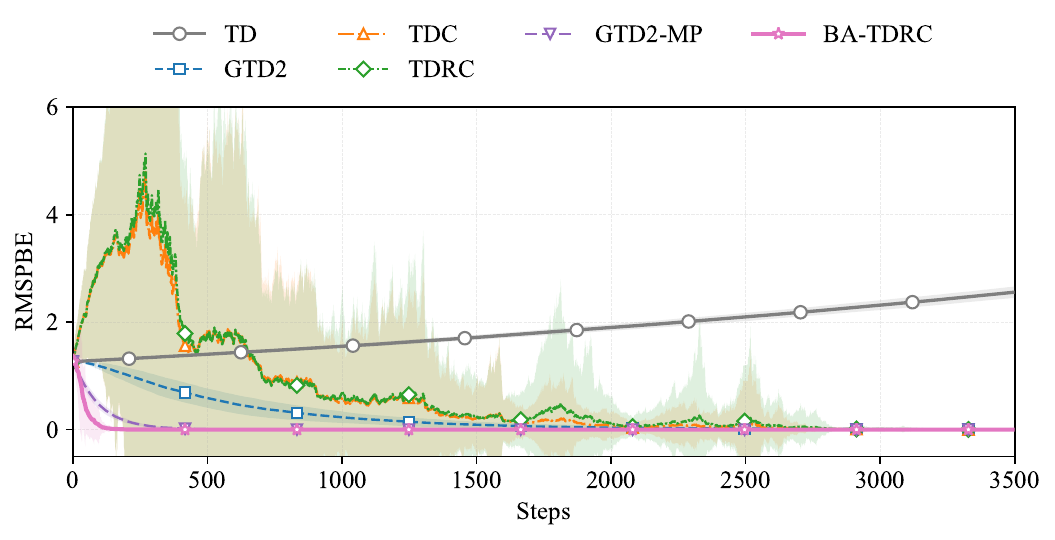}
\caption{Main comparison on the two-state counterexample. Curves show mean RMSPBE over 100 runs and shaded regions show one sample standard deviation.}
\label{fig:two_state_curve}
\end{figure}

\begin{figure}[!t]
\centering
\includegraphics[width=0.95\textwidth]{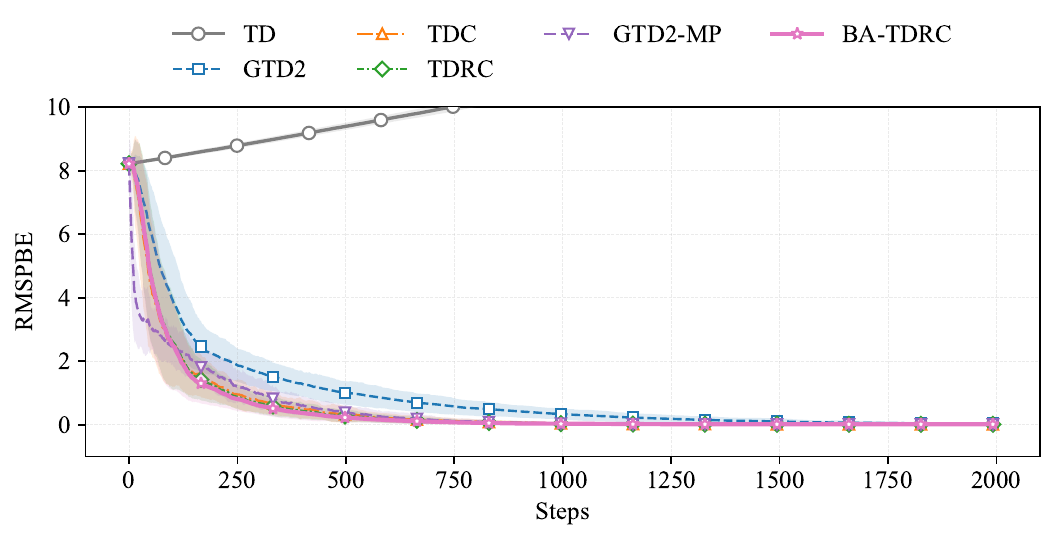}
\caption{Main comparison on Baird's counterexample. Curves show mean RMSPBE over 100 runs and shaded regions show one sample standard deviation.}
\label{fig:baird_curve}
\end{figure}

\begin{figure}[!t]
\centering
\includegraphics[width=0.95\textwidth]{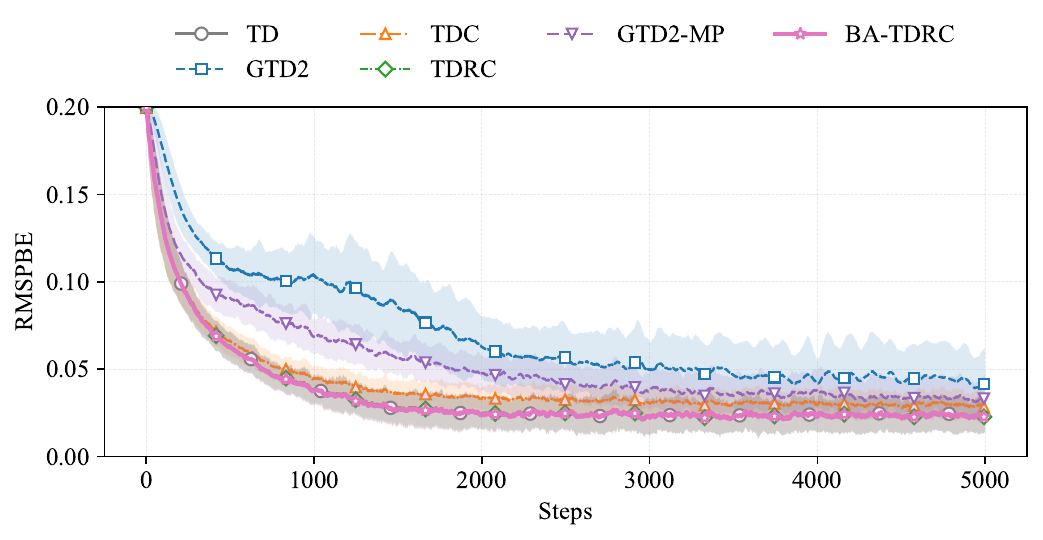}
\caption{Main comparison on Random Walk. Curves show mean RMSPBE over 100 runs and shaded regions show one sample standard deviation.}
\label{fig:random_walk_curve}
\end{figure}

\begin{figure}[!t]
\centering
\includegraphics[width=0.95\textwidth]{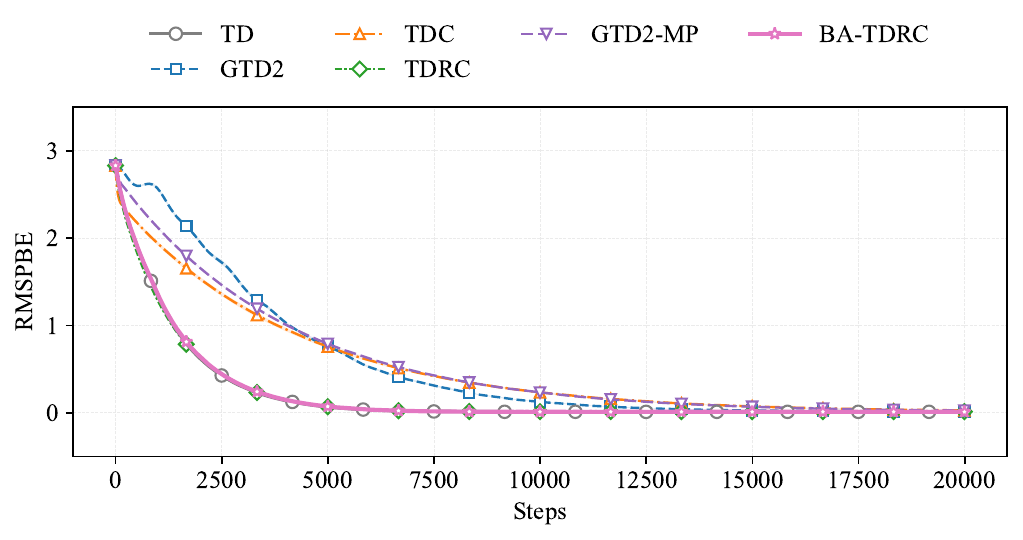}
\caption{Main comparison on Boyan Chain. Curves show mean RMSPBE over 100 runs and shaded regions show one sample standard deviation.}
\label{fig:boyan_chain_curve}
\end{figure}

Figures~\ref{fig:two_state_curve}--\ref{fig:boyan_chain_curve} compare BA-TDRC with the main baselines. BA-TDRC reaches near-zero RMSPBE on the two-state counterexample, is competitive with TDC and TDRC on Baird's counterexample, matches the strongest TD-style methods on Random Walk, and obtains the best result on Boyan Chain. These results support BA-TDRC as a regularized behavior-aware correction, but they do not imply uniform dominance over every baseline in every environment.

\begin{table}[!t]
\centering
\caption{Steady-state AUC error (last-50\% time-average RMSPBE), mean $\pm$ sample standard deviation over 100 runs. Lower is better.}
\label{tab:auc}
\resizebox{\textwidth}{!}{%
\begin{tabular}{lcccc}
\toprule
Algorithm & Two-state & Baird & Random Walk & Boyan Chain \\
\midrule
TD & $2.718\pm0.109$ & $12.203\pm0.233$ & $\mathbf{0.0236\pm0.0024}$ & $0.0115\pm0.0008$ \\
GTD2 & $3.14\times10^{-3}\pm3.04\times10^{-3}$ & $0.1244\pm0.0817$ & $0.0475\pm0.0053$ & $0.0397\pm0.0021$ \\
TDC & $6.33\times10^{-3}\pm3.41\times10^{-2}$ & $0.0153\pm0.0092$ & $0.0306\pm0.0023$ & $0.0916\pm0.0036$ \\
TDRC & $1.05\times10^{-2}\pm5.55\times10^{-2}$ & $0.0162\pm0.0093$ & $0.0237\pm0.0024$ & $0.0133\pm0.0007$ \\
GTD2-MP & $2.89\times10^{-13}\pm3.46\times10^{-13}$ & $\mathbf{0.0128\pm0.0053}$ & $0.0363\pm0.0022$ & $0.0879\pm0.0010$ \\
BA-TDRC & $\mathbf{9.00\times10^{-43}\pm8.87\times10^{-42}}$ & $0.0151\pm0.0086$\textsuperscript{$\dagger$} & $\mathbf{0.0236\pm0.0024}$ & $\mathbf{0.0114\pm0.0008}$ \\
\bottomrule
\end{tabular}}
\vspace{0.4ex}
\begin{minipage}{0.98\textwidth}
\footnotesize \textsuperscript{$\dagger$} The Hurwitz condition in Assumption~\ref{ass:hurwitz} is not satisfied under this hyperparameter setting (see Table~\ref{tab:numerical}); this result is an empirical stress test, not covered by Theorem~\ref{thm:convergence}.
\end{minipage}
\end{table}

\begin{table}[!t]
\centering
\caption{Final RMSPBE at the last step, mean $\pm$ sample standard deviation over 100 runs. Lower is better.}
\label{tab:final}
\resizebox{\textwidth}{!}{%
\begin{tabular}{lcccc}
\toprule
Algorithm & Two-state & Baird & Random Walk & Boyan Chain \\
\midrule
TD & $3.452\pm0.174$ & $13.838\pm0.349$ & $\mathbf{0.0236\pm0.0090}$ & $0.0109\pm0.0041$ \\
GTD2 & $1.59\times10^{-4}\pm2.52\times10^{-4}$ & $0.0308\pm0.0325$ & $0.0419\pm0.0204$ & $0.0157\pm0.0071$ \\
TDC & $1.09\times10^{-4}\pm4.74\times10^{-4}$ & $0.0104\pm0.0077$ & $0.0290\pm0.0074$ & $0.0270\pm0.0047$ \\
TDRC & $3.72\times10^{-5}\pm1.24\times10^{-4}$ & $0.0137\pm0.0089$ & $0.0236\pm0.0090$ & $0.0127\pm0.0046$ \\
GTD2-MP & $4.41\times10^{-23}\pm6.96\times10^{-23}$ & $\mathbf{0.0082\pm0.0008}$ & $0.0339\pm0.0132$ & $0.0254\pm0.0039$ \\
BA-TDRC & $\mathbf{1.91\times10^{-86}\pm1.02\times10^{-85}}$ & $0.0136\pm0.0086$\textsuperscript{$\dagger$} & $\mathbf{0.0236\pm0.0090}$ & $\mathbf{0.0108\pm0.0041}$ \\
\bottomrule
\end{tabular}}
\vspace{0.4ex}
\begin{minipage}{0.98\textwidth}
\footnotesize \textsuperscript{$\dagger$} The Hurwitz condition in Assumption~\ref{ass:hurwitz} is not satisfied under this hyperparameter setting (see Table~\ref{tab:numerical}); this result is an empirical stress test, not covered by Theorem~\ref{thm:convergence}.
\end{minipage}
\end{table}

Tables~\ref{tab:auc} and~\ref{tab:final} show the main quantitative pattern without the ablation-only BA-TDC variant. BA-TDRC is strongest on the two-state and Boyan Chain tasks, slightly improves over TDC and TDRC on Baird in steady-state AUC, and matches the best Random Walk result. The Baird entries marked by $\dagger$ are not covered by the Hurwitz condition used in Theorem~\ref{thm:convergence}; they are retained to show the empirical stress-test behavior under the same tuning protocol. GTD2-MP remains best on Baird, so BA-TDRC should be viewed as improving the TDC/TDRC correction family rather than replacing all gradient-TD alternatives.

On Random Walk, off-policy bias is mild because the behavior policy is uniform ($\mu(\cdot|s)=0.5$) and the target policy is only slightly biased ($\pi(\text{right}|s)=0.6$), giving importance ratios $\rho\in\{0.8,1.2\}$ close to one. Under such mildly off-policy conditions, the tuned BA-TDRC hyperparameters yield a large effective regularization on the auxiliary recursion, so the auxiliary variable $w_t$ stays close to zero and the primary update reduces to off-policy semi-gradient TD. The tuned TDRC selects a similarly small auxiliary effect. This explains why BA-TDRC, TDRC, and TD coincide on Random Walk: the corrected and uncorrected updates are numerically indistinguishable in this regime. This also agrees with the mean-operator analysis in Table~\ref{tab:numerical}: TDRC has a slightly smaller deterministic factor there, but the difference is too small to produce a visible advantage in the stochastic finite-horizon RMSPBE curves.

\subsection{Modular Ablation}

\begin{figure}[!t]
\centering
\includegraphics[width=0.95\textwidth]{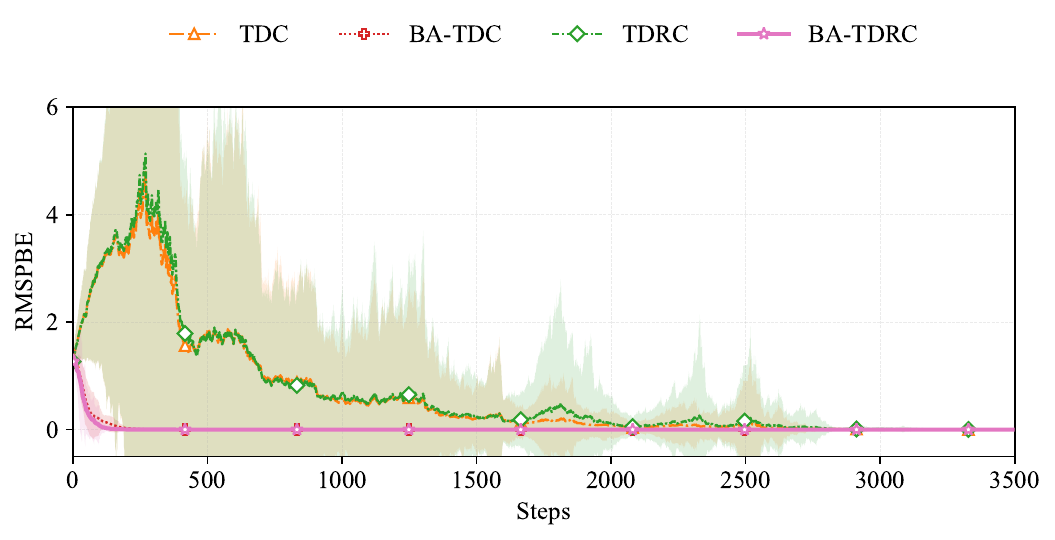}
\caption{Ablation on the two-state counterexample. The comparison isolates TDC $\rightarrow$ BA-TDC and TDRC $\rightarrow$ BA-TDRC.}
\label{fig:ablation_two_state}
\end{figure}

\begin{figure}[!t]
\centering
\includegraphics[width=0.95\textwidth]{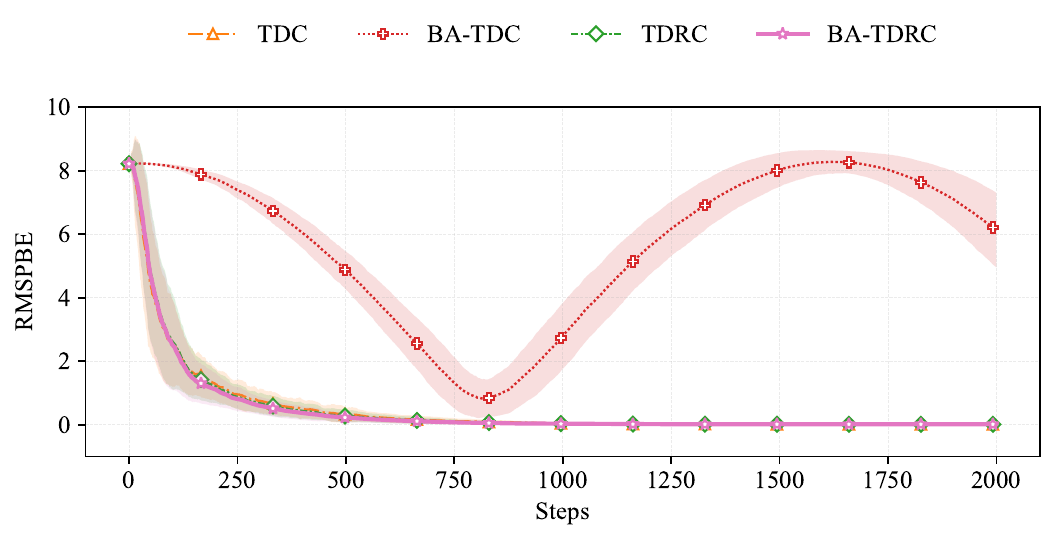}
\caption{Ablation on Baird's counterexample. The comparison isolates TDC $\rightarrow$ BA-TDC and TDRC $\rightarrow$ BA-TDRC.}
\label{fig:ablation_baird}
\end{figure}

\begin{figure}[!t]
\centering
\includegraphics[width=0.95\textwidth]{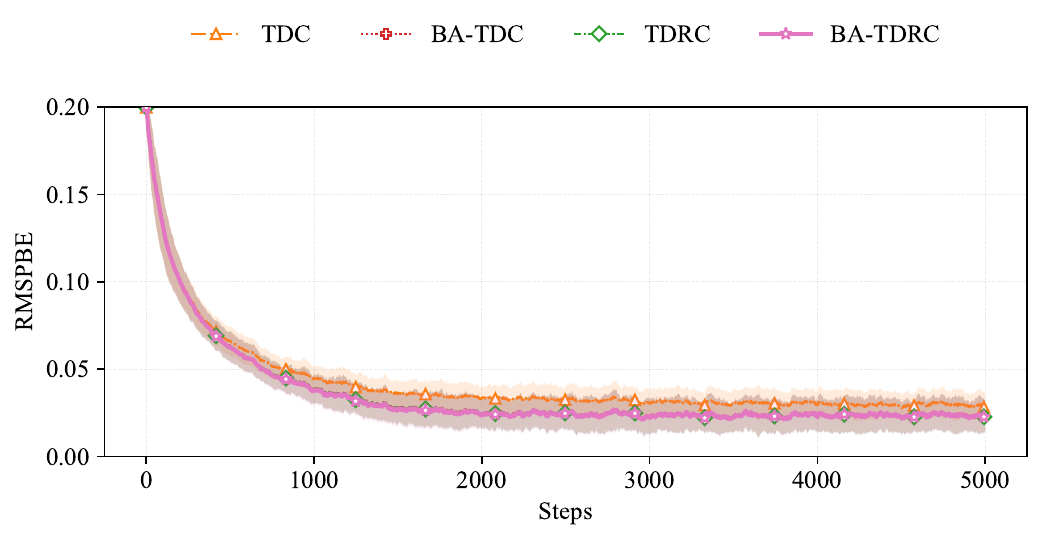}
\caption{Ablation on Random Walk. The comparison isolates TDC $\rightarrow$ BA-TDC and TDRC $\rightarrow$ BA-TDRC.}
\label{fig:ablation_random_walk}
\end{figure}

\begin{figure}[!t]
\centering
\includegraphics[width=0.95\textwidth]{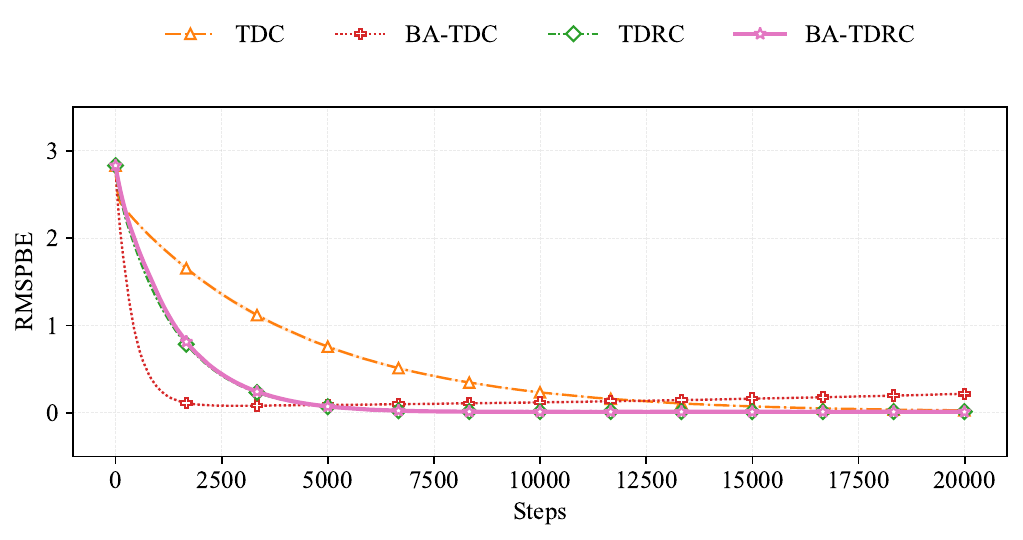}
\caption{Ablation on Boyan Chain. The comparison isolates TDC $\rightarrow$ BA-TDC and TDRC $\rightarrow$ BA-TDRC.}
\label{fig:ablation_boyan_chain}
\end{figure}

\begin{table}[!t]
\centering
\caption{Ablation AUC results for the covariance-to-behavior replacement and regularization. Lower is better.}
\label{tab:ablation_auc}
\resizebox{\textwidth}{!}{%
\begin{tabular}{lcccc}
\toprule
Algorithm & Two-state & Baird & Random Walk & Boyan Chain \\
\midrule
TDC & $6.33\times10^{-3}\pm3.41\times10^{-2}$ & $0.0153\pm0.0092$ & $0.0306\pm0.0023$ & $0.0916\pm0.0036$ \\
BA-TDC & $7.19\times10^{-23}\pm4.43\times10^{-22}$ & $6.809\pm0.383$ & $0.0240\pm0.0024$ & $0.1637\pm0.0014$ \\
TDRC & $1.05\times10^{-2}\pm5.55\times10^{-2}$ & $0.0162\pm0.0093$ & $0.0237\pm0.0024$ & $0.0133\pm0.0007$ \\
BA-TDRC & $\mathbf{9.00\times10^{-43}\pm8.87\times10^{-42}}$ & $\mathbf{0.0151\pm0.0086}$ & $\mathbf{0.0236\pm0.0024}$ & $\mathbf{0.0114\pm0.0008}$ \\
\bottomrule
\end{tabular}}
\end{table}

The ablation shows that the behavior-aware replacement and regularization have different roles. Replacing $C$ by $A_\mu$ alone is highly effective on the two-state counterexample and helpful on Random Walk, but it fails on Baird and Boyan Chain. This is expected from the structure of the unregularized auxiliary equation: $A_\mu$ contains temporal feature transport and is not a symmetric covariance matrix, so in aliased or strongly coupled feature spaces its inverse can amplify noise and transient residuals rather than damping them. Baird's counterexample is especially sensitive because the feature representation is over-parameterized relative to the state dynamics and the importance ratios are extreme. In this setting, the nonsymmetric behavior-aware auxiliary direction is repeatedly excited by rare high-ratio solid transitions, producing the nonmonotone growth visible in the ablation curve. Boyan Chain has milder ratios but strongly correlated features. Adding $\beta I$ shifts the auxiliary matrix away from such poorly conditioned behavior-aware inversions, preserves the two-state gain, and removes the severe BA-TDC failures.

\subsection{Step-Size Robustness}

\begin{figure}[!t]
\centering
\includegraphics[width=0.98\textwidth]{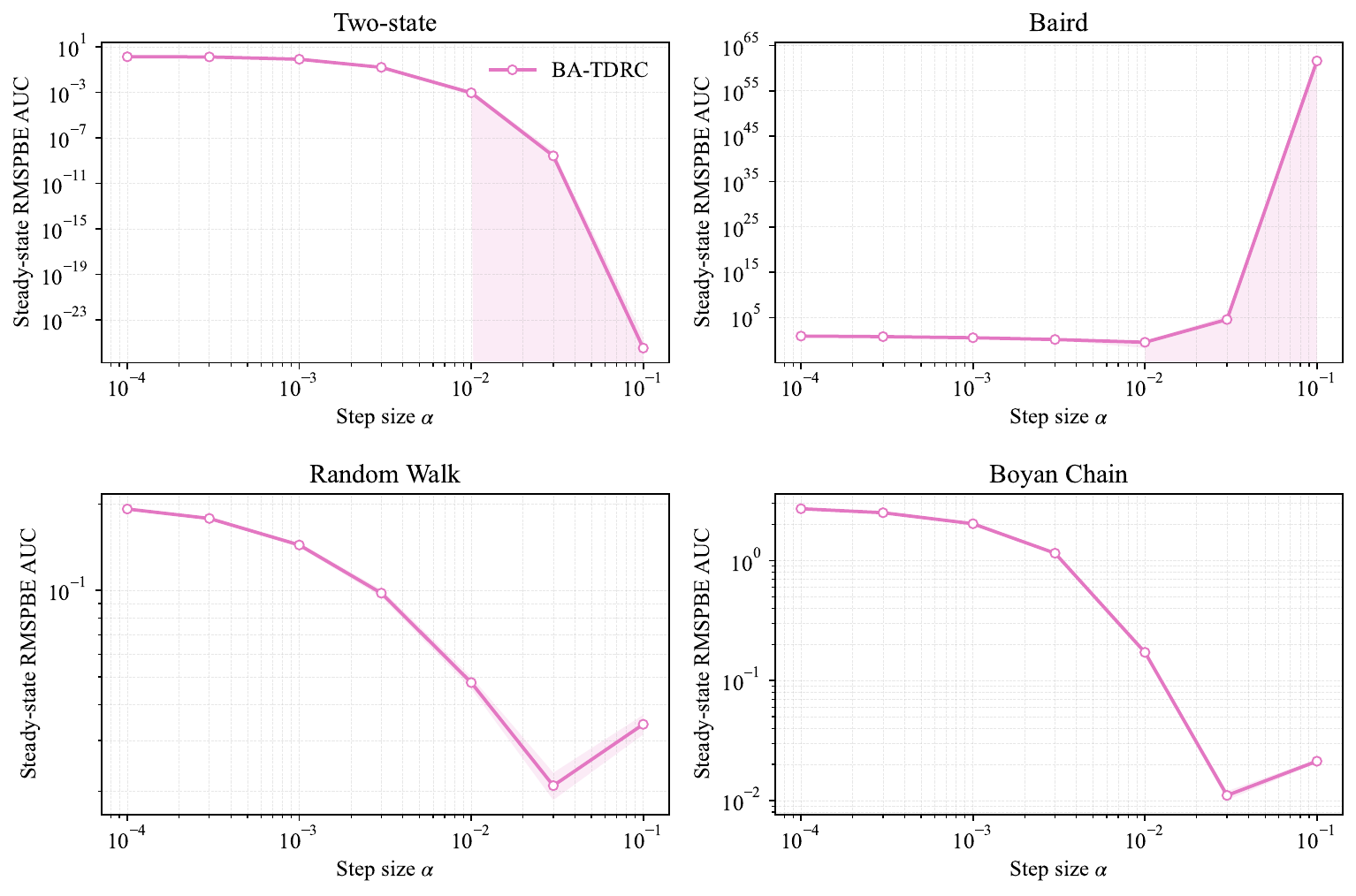}
\caption{BA-TDRC step-size robustness across the four off-policy benchmarks. Each panel reports steady-state RMSPBE AUC against the primary step size $\alpha$ over 100 seeds, with the regularizer fixed to $\beta=1.0$ and the auxiliary-to-primary gain ratio fixed to $\lambda=1$; these fixed values differ from the per-environment tuning used in the main comparison.}
\label{fig:robustness_grid}
\end{figure}

Figure~\ref{fig:robustness_grid} reports BA-TDRC robustness alone, rather than mixing robustness curves from unrelated baselines. The four panels show how sensitive BA-TDRC is to the primary step size in each environment. The two-state case has a wide low-error region at larger step sizes, Random Walk and Boyan Chain have smoother low-error regions, and Baird is more sensitive, consistent with its role as a difficult off-policy counterexample. The vertical axes are logarithmic and span very different ranges because BA-TDRC reaches extremely small RMSPBE on the two-state problem, whereas high step sizes on Baird can amplify the already difficult off-policy transient by many orders of magnitude. In these robustness plots, the regularizer is fixed to $\beta=1.0$ and the auxiliary-to-primary gain ratio is fixed to $\lambda=1$ across all environments, so that only the primary step size $\alpha$ varies along the horizontal axis. This is a controlled robustness sweep and is therefore not the same hyperparameter setting as the per-environment BA-TDRC in the main comparison, where $\beta$ and $\lambda$ are tuned per environment.

\section{Numerical Mean-Operator Analysis}\label{sec:numerical}

For each benchmark, we construct the exact finite-state matrices $A_\pi$, $A_\mu$, $C$, and $D_\pi$ and instantiate the two auxiliary matrices in the theory: $M_C=C+\eta I$ for TDRC and $M_A=A_\mu+\beta I$ for BA-TDRC. The values of $\eta$, $\beta$, and the auxiliary gain ratios are the tuned values used in the main experiments. Table~\ref{tab:numerical} reports three checks that connect the analysis to the experiments: (i) whether $M_A-D_\pi$ is nonsingular, so the TD fixed point is preserved by Proposition~\ref{prop:td_fixed_point}; (ii) the Hurwitz margins for the TDRC and BA-TDRC mean systems, where a positive value verifies the corresponding mean-system stability condition; and (iii) whether the behavior-aware speed advantage condition in Assumption~\ref{ass:speed}, namely $q_{M_A,\lambda_A}<q_{M_C,\lambda_C}$, holds.

\begin{table}[!t]
\centering
\caption{Exact mean-operator verification of the theory. The column $\sigma_{\min}(M_A-D_\pi)$ checks fixed-point preservation; positive Hurwitz margins indicate stable continuous-time mean systems; lower best $q$ is faster and verifies Assumption~\ref{ass:speed}.}
\label{tab:numerical}
\resizebox{\textwidth}{!}{%
\begin{tabular}{lccccccc}
\toprule
Environment & $\sigma_{\min}(M_A-D_\pi)$ & TDRC margin & BA margin & Best $q_C$ & Best $q_A$ & $q_A<q_C$ & Interpretation \\
\midrule
Two-state & $1.525$ & $0.0265$ & $0.4875$ & $0.9749$ & $0.5745$ & yes & large BA speed advantage \\
Baird & $0.224$ & $4.96\times10^{-18}$ & $-3.92\times10^{-4}$ & $1.0000$ & $1.0000$ & no & BA near-critical \\
Random Walk & $9.827$ & $0.0059$ & $0.0318$ & $0.9941$ & $0.9970$ & no & TDRC mean factor smaller \\
Boyan Chain & $9.794$ & $0.0145$ & $0.0243$ & $0.9855$ & $0.9857$ & no & speed condition not verified \\
\bottomrule
\end{tabular}}
\end{table}

The numerical analysis forms the bridge between the theory and the experiments. First, $\sigma_{\min}(M_A-D_\pi)>0$ in all four benchmarks, so the condition used to prove preservation of the same TD fixed point is verified. Second, the Hurwitz condition required for the almost-sure convergence theorem is verified for the tuned two-state, Random Walk, and Boyan Chain BA-TDRC mean systems, but not for Baird under the selected constant-gain configuration. The Baird BA-TDRC margin is only slightly negative ($-3.92\times10^{-4}$), while the corresponding TDRC margin is essentially zero and positive only at numerical precision ($4.96\times10^{-18}$). Thus both regularized correction systems are close to the stability boundary on Baird. With the tuned finite step size and regularization, the stochastic recursion remains empirically well controlled over the evaluated horizon, but this behavior is outside the theorem's coverage and should be read as a stress test of the update. Third, the speed assumption $q_{M_A,\lambda_A}<q_{M_C,\lambda_C}$ holds clearly on the two-state counterexample, matching the large empirical gain. The other three benchmarks do not satisfy the deterministic mean-speed condition under the tuned RMSPBE settings, so Proposition~\ref{prop:faster} is not invoked for those cases.

This gap between the mean-rate condition and the empirical results is informative. Assumption~\ref{ass:speed} compares asymptotic deterministic linear factors of the exact mean recursion; it does not account for finite-horizon bias, Markovian sampling variance, or the way regularization changes the magnitude and variability of the auxiliary vector along stochastic trajectories. On Random Walk, the off-policy mismatch is mild and the tuned regularization drives $w_t$ close to zero, making BA-TDRC behave like off-policy semi-gradient TD. On Boyan Chain, the speed factors of TDRC and BA-TDRC are nearly identical ($0.9855$ versus $0.9857$), so the deterministic asymptotic comparison is too fine to explain the observed AUC difference; finite-sample damping and auxiliary-variance effects can dominate over the small mean-factor disadvantage. The present experiments support this interpretation qualitatively through the ablation and robustness patterns, but they do not provide a separate variance decomposition. A complete theory connecting behavior-aware regularization to stochastic finite-sample RMSPBE remains open.

\section{Conclusion}

We proposed behavior-aware auxiliary corrections for off-policy temporal-difference prediction. BA-TDC isolates the replacement of the covariance matrix $C$ by the behavior Bellman matrix $A_\mu$, while BA-TDRC combines the same replacement with TDRC-style regularization. We gave the mean-system formulation, established convergence under a Hurwitz condition verified from exact finite-state matrices for three of the four tuned benchmark settings, derived a conditional mean-rate comparison, and evaluated the modular increments on four standard off-policy benchmarks. Baird serves as an empirical stress test outside the coverage of the convergence theorem. The RMSPBE results show that behavior-aware geometry alone can be very effective on the two-state counterexample but unreliable on harder tasks, whereas the regularized BA-TDRC variant is competitive with or better than TDC/TDRC across the tested prediction problems. The method should therefore be viewed as a modular correction-geometry change whose benefit depends on the behavior-induced mean operator and its interaction with regularization, rather than as a uniformly dominant replacement for TDRC on every off-policy prediction task. Extending the idea to neural-network critics is a natural next step, but it requires controlling the additional error from learned, time-varying feature maps and from online estimates of behavior-aware feature-transition operators.

\section*{Data and Code Availability}
The experimental code is publicly available at \url{https://github.com/GameAI-NJUPT/BA-TDRC}. Generated result tables and figures are not included in the repository and can be reproduced by running the scripts with the documented protocols.

\bibliographystyle{elsarticle-num}
\bibliography{references}

\end{document}